\newtheorem{proposition}{Proposition}
\newcommand{\mypm}{\,$\pm$\,}
\newcommand{\red}[1]{\textbf{\textcolor{red}{#1}}}
\newcommand{\black}[1]{\textbf{\textcolor{black}{#1}}}
\crefname{section}{Sec.}{Secs.}
\Crefname{section}{Section}{Sections}
\Crefname{table}{Table}{Tables}
\crefname{table}{Tab.}{Tabs.}
\begin{document}

\title{Class-Incremental Learning by Knowledge Distillation \\
with Adaptive Feature Consolidation}  
\author{Minsoo Kang$^\dagger$ \qquad\qquad Jaeyoo Park$^\dagger$ \qquad\qquad Bohyung Han$^{\dagger, \S}$ \\
ECE$^\dagger$, ASRI$^\dagger$, \& IPAI$^{\dagger, \S}$ \\ Seoul National University\\
{\tt\small \{kminsoo,bellos1203,bhhan\}@snu.ac.kr}
}
\maketitle


\begin{abstract}
We present a novel class incremental learning approach based on deep neural networks, which continually learns new tasks with limited memory for storing examples in the previous tasks.
Our algorithm is based on knowledge distillation and provides a principled way to maintain the representations of old models while adjusting to new tasks effectively.
The proposed method estimates the relationship between the representation changes and the resulting loss increases incurred by model updates.
It minimizes the upper bound of the loss increases using the representations, which exploits the estimated importance of each feature map within a backbone model.
Based on the importance, the model restricts updates of important features for robustness while allowing changes in less critical features for flexibility. 
This optimization strategy effectively alleviates the notorious catastrophic forgetting problem despite the limited accessibility of data in the previous tasks.
The experimental results show significant accuracy improvement of the proposed algorithm over the existing methods on the standard datasets.
Code is available.\footnote{\url{https://github.com/kminsoo/AFC}}
\end{abstract}


\section{Introduction}
\label{sec:introduction}
Deep neural networks have achieved outstanding results in various applications including computer vision~\cite{he2016deep, nam2016learning, noh2015learning, redmon2016you, long2015fully}, natural language processing~\cite{NIPS2014_a14ac55a, vaswani2017attention}, speech recognition~\cite{chan2016listen,amodei2016deep}, robotics~\cite{lenz2015deep}, bioinformatics~\cite{angermueller2016deep}, and many others. 
Despite their impressive performance on offline learning problems, it is still challenging to train models for a sequence of tasks in an online manner, where only a limited number of examples  in the previous tasks are available due to memory constraints or privacy issues.

\begin{figure}[ht!]
    \centering
    \includegraphics[width=0.95\linewidth]{./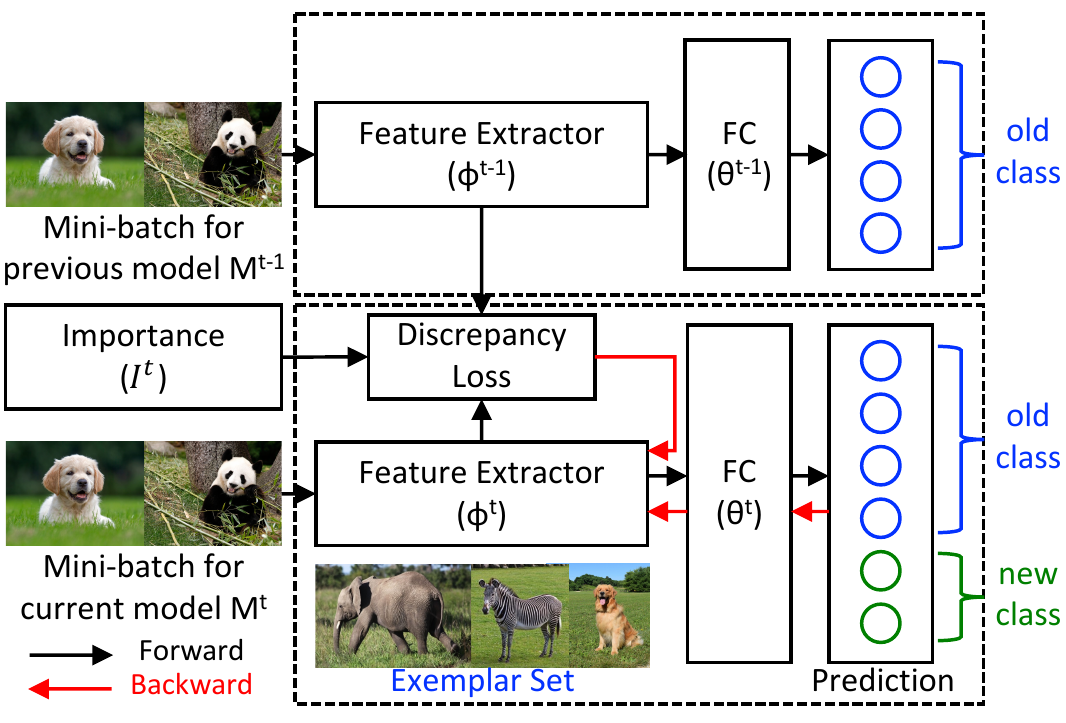}
    \caption{Illustration of the proposed method. The current model optimizes the task loss over a mini-batch sampled from the data of the current task and exemplar sets for the previous tasks while it aims to differently minimize the discrepancy in each feature map over the previous model based on the corresponding importance.
    } 
  \vspace{-0.2cm}
  \label{fig:importance}
\end{figure}  
Although fine-tuning is a good strategy to learn a new task given an old model, it is not effective for streaming tasks due to the catastrophic forgetting problem~\cite{mccloskey1989catastrophic}; the model performs well on the current task while it often fails to generalize on the previous ones. 
Therefore, incremental learning, the framework capacitating online learning without forgetting, has been studied actively.
To mitigate the catastrophic forgetting problem, several branches of algorithms have been discussed.
Architectural approaches employ network expansion schemes~\cite{rusu2016progressive, liu2021adaptive, yan2021dynamically} instead of using static models.
Rehearsal methods present the strategies to summarize the tasks in the past by storing exemplar sets~\cite{rebuffi2017icarl, aljundi2019gradient} or generate samples by estimating the data distribution in the previous tasks~\cite{shin2017continual}.
On the other hand, parameter regularization methods~\cite{kirkpatrick2017overcoming, zenke2017continual, aljundi2018memory} prevent important weights from deviating the previously learned models.
Knowledge distillation approaches~\cite{jung2016less, li2017learning} focus on minimizing the divergences from the representations of the previous models while effectively adapting to new tasks.

We present a knowledge distillation approach for class incremental learning, where the model incrementally learns new classes via knowledge distillation, with limited accessibility to the data from the classes in the old tasks.
Existing approaches based on knowledge distillation simply minimize the distances between the representations of the old and new models without considering which feature maps are important to maintain the previously acquired knowledge.
Although \cite{park2021class} addresses this issue, it is limited to the heuristic assignments of the importance weights.
On the contrary, we estimate how the representation change given by a model update affects the loss, and show that the minimization of the loss increases can be achieved by a proper weighting of feature maps for knowledge distillation.
In a nutshell, the proposed approach maintains important features for robustness, while making less critical features flexible for adaptivity.
Note that our optimization strategy aims to minimize the upper bound of the expected loss increases over the previous tasks, which eventually reduces the catastrophic forgetting problem.  
Figure~\ref{fig:importance} illustrates the main idea of the proposed approach.
The main contributions and characteristics of our algorithm are summarized below:
\begin{itemize}[label=$\bullet$]
	\item
	We propose a simple but effective knowledge distillation algorithm for class incremental learning based on knowledge distillation with feature map weighting.
	\item
	We theoretically show that the proposed approach minimizes the upper bound of the loss increases over the previous tasks, which is derived by recognizing the relationship between the distribution shift in a feature map and the change in the loss.
	\item
	Experimental results demonstrate that the proposed technique outperforms the existing methods by large margins in various scenarios.
\end{itemize}

The rest of the paper is organized as follows.
Section~\ref{sec:related} discusses related works to class incremental learning.
The details of our method is described in Section~\ref{sec:framework}, and the experimental results are presented in Section~\ref{sec:experiments}.
Finally, we conclude this paper in Section~\ref{sec:conclusion}.


\section{Related Work}
\label{sec:related}

Existing class incremental learning approaches are categorized into one of the following four groups: architectural, rehearsal, parameter regularization, and knowledge distillation methods.
This section reviews the previous methods in each category.

\subsection{Architectural Methods}
\label{sec:architecture}
Most architectural methods adjust network capacity dynamically to handle a sequence of incoming tasks.
Progressive networks~\cite{rusu2016progressive} augment a network component whenever learning  a new task.
Continual neural Dirichlet process mixture~\cite{lee2019neural} estimates the joint likelihood of a new sample and its label by which it decides whether an additional network component is required to handle the new sample.
There exist a few algorithms related to network augmentation, which increase the number of active channels adaptively with a sparse regularization to maintain the proper model size in a data-driven way~\cite{abati2020conditional, yan2021dynamically}.
Although these approaches are effective to deal with a long sequence of tasks, they require creating and storing additional network components and performing multiple forward pass computations for inference, which incur extra computational costs.

\subsection{Rehearsal and Pseudo-Rehearsal Methods}
\label{sec:rehearsal and pseudo-rehearsal}
Rehearsal methods save a small number of examples that represent the whole dataset effectively and utilize the stored data to learn a new task; the techniques in this category often focus on sample selection for rehearsal
Incremental Classifier and Representation Learning (iCaRL)~\cite{rebuffi2017icarl} selects the examples in a greedy manner with a cardinality constraint by approximating the mean of training data.
Also, \cite{aljundi2019gradient} identifies an exemplar set for the best approximation of the feasible region in the model parameter space induced by the whole data.  
On the other hand, pseudo-rehearsal techniques estimate the data distribution of the previous tasks and train new models using generated samples together with the real data of the new task to mitigate the class imbalance problem. 
For example, \cite{ostapenko2019learning} synthesizes the fake samples of the old tasks to balance the number of examples between old and new classes by employing class-conditional image synthesis networks~\cite{odena2017conditional}.

\subsection{Parameter Regularization Approaches}
\label{sec:parameter regularization}
Parameter regularization approaches encourage or penalize the updates of individual parameters depending on their importance.
To be more specific, Elastic Weight Consolidation (EWC)~\cite{kirkpatrick2017overcoming} estimates the rigidity or flexibility of each parameter based on the Fisher information matrix and employs its relevance to the previous tasks for model update.
Synaptic Intelligence (SI)~\cite{zenke2017continual} also provides the weight of each parameter by estimating the path integral along the optimization trajectory for individual parameters.
On the other hand, \cite{aljundi2018memory} calculates the importance in an unsupervised manner by computing the gradient of the magnitude of the network output.
However, as discussed in \cite{hsu2018re, van2019three}, parameter regularization methods empirically present relatively lower accuracy than the techniques in other categories, and might serve as a poor proxy for keeping the network output~\cite{benjamin2019measuring}.

\subsection{Knowledge Distillation Approaches}
\label{sec:functional regularization}
Knowledge distillation is originally proposed to achieve better generalization performance by transferring knowledge from a pretrained model (\ie, teacher) to a target network (\ie, student) through matching their logits~\cite{ba2014deep}, output distributions~\cite{hinton2015distilling}, intermediate activations~\cite{romero2014fitnets}, or attention maps~\cite{zagoruyko2016paying}.
In the class incremental learning scenarios, the model trained for old tasks is often considered as the teacher network while the student network corresponds to the model to be learned with the additional data in a new task from the initialization point given by the pretrained weights of the teacher.

As the teacher's knowledge, iCaRL~\cite{rebuffi2017icarl} employs the sigmoid output of each class in the previous tasks while~\cite{castro2018end, wu2019large, zhao2020maintaining} use the normalized softmax outputs with a temperature scaling.
Unified Classifier Incrementally via Rebalancing (UCIR)~\cite{hou2019learning} tackles the forgetting problem by maximizing the cosine similarity between the features embedded by the teacher and the student.
On the other hand, Learning without Memorizing (LwM)~\cite{dhar2019learning} minimizes the difference of the gradients given by the highest score classes in the old tasks with respect to intermediate feature maps, which is motivated by \cite{selvaraju2017grad}. 
Pooled Outputs Distillation (PODNet)~\cite{douillard2020podnet} minimizes the difference of the pooled intermediate features in the height and width directions instead of performing element-wise comparisons; such a relaxed objective using knowledge distillation turns out to be effective for class incremental learning.

Different from the previous works including the closely related method~\cite{park2021class}, our approach estimates the importance of each feature map for knowledge distillation and provides the theoretical foundation for the adaptive weighting of feature maps.
Note that the proposed knowledge distillation algorithm with the adaptive feature map weighting minimizes the loss increases by model updates over the previous tasks.


\section{Proposed Algorithm}
\label{sec:framework}
This section describes the proposed approach based on the knowledge distillation, which alleviates the catastrophic forgetting issue for class incremental learning.

\subsection{Problem Formulation}
\label{sub:problem formulation}

Convolutional neural networks~\cite{he2016deep, huang2017densely} typically contain multiple basic building blocks, each of which is composed of convolutions, batch normalizations~\cite{ioffe2015batch}, and activation functions.
A convolutional neural network learner at an incremental stage $t$, denoted by $M^t(\cdot)$, involves a classifier, $g(\cdot)$, and $L$ building blocks, $\{ f_{\ell}(\cdot) ; \ell=1,2,\dots ,L \}$, which is expressed as
\begin{align}
\hat{y} = M^t (x) & = g(h)  \nonumber \\
& = g \left( f_L \circ \cdots \circ f_{\ell} \circ \cdots \circ f_1(x) \right).
\end{align}
Note that the classifier maps $h \in \mathbb{R}^{d}$, an embedding vector given by the last building block, to a vector $\hat{y} \in \mathbb{R}^{n^t}$, where $n^t$ denotes the number of the observed classes until the $t^{\text{th}}$ task.
The $\ell^{\text{th}}$ building block $f_{\ell}(\cdot)$ also maps an input $Z_{\ell - 1}$ to a set of feature maps $Z_{\ell} = \{ Z_{\ell,1}, Z_{\ell,2}, \cdots, Z_{\ell,C_{\ell}} \}$.
As mentioned earlier, the network $M^t(\cdot)$ is initialized by the model parameters in the previous stage $M^{t-1}(\cdot)$ except the additional parameters introduced to handle new classes in the classifier.
For notational convenience, we introduce two fuctions $F_\ell(\cdot)$ and $G_\ell(\cdot)$ splitting the whole model into two subnetworks as follows:%
\begin{align}
M^t(x) =  G_\ell(Z_{\ell}) = G_\ell ( F_\ell (x) ).
\end{align}

At the time of training $M^t$, it optimizes the following objective function to mitigate catastrophic forgetting issue while learning new classes:
\begin{equation}
\displaystyle{\min_{M^t}\;\,  \mathbb{E}_{(x,y) \sim \mathcal{P}^{t-1}_{\text{data}}} \Big[ \mathcal{L}(M^{t}(x), y)\Big]},
	\label{eq:expectation_over_loss}
\end{equation}
where $\mathcal{P}^{t-1}_{\text{data}}$ is the data distribution of the past tasks until the $t-1^\text{st}$ incremental learning stage, and $\mathcal{L}(\cdot, \cdot)$ is a task-specific loss, \eg, classification loss in this work. 
However, it is well known that simply optimizing the objective while learning for new classes leads to catastrophic forgetting.
Although the discrepancy between two adjacent continual learning models, $M^{t-1}(\cdot)$ and $M^t(\cdot)$, may be small, the amount of forgetting gets more significant as the errors for the old tasks increase with respect to new models.

To analyze the effect of the distribution change from $Z_{\ell}$ to $Z'_{\ell}$ obtained respectively by $M^{t-1}(\cdot)$ and $M^t(\cdot)$ in the $\ell^\text{th}$ layer, 
we compute the loss given by $Z'_{\ell}$ via the first order Taylor approximation as follows:
\begin{align}
 	\mathcal{L} &(  G_\ell(Z'_{\ell}), y)  \approx \label{eq:Taylor_loss_approx} \\ 
	&  \mathcal{L}(G_\ell(Z_\ell), y) +  \sum_{c=1}^{C_\ell} \Big\langle \nabla_{Z_{\ell,c}} \mathcal{L}(G_\ell(Z_\ell), y), Z'_{\ell,c} - Z_{\ell,c} \Big\rangle _F,  \nonumber 
\end{align}
where $\langle \cdot , \cdot \rangle_F$ is the Frobenius inner product, \ie, $\langle A, B\rangle_F =  \text{tr}(A^\top B)$.
Note that  
$\nabla_{Z_{\ell,c}} \mathcal{L}(G_\ell(Z_\ell), y) \in R^{H_{\ell} \times W_{\ell}}$ is given by the backpropagation in $M^{t-1}(\cdot)$. 
Using Eq.~\eqref{eq:Taylor_loss_approx}, we define the loss increases due to the distribution shift in the $c^{\text{th}}$ channel of the $\ell^\text{th}$ layer, $\triangle \mathcal{L}(Z'_{\ell,c})$, as
\begin{align}
\label{eq:loss_approx} 
\triangle \mathcal{L}(Z'_{\ell,c}) & := \Big\langle \nabla_{Z_{\ell,c}} \mathcal{L} (G_\ell(Z_\ell), y), Z'_{\ell,c} - Z_{\ell,c} \Big\rangle _F.
\end{align}

Instead of solving the optimization problem in Eq.~\eqref{eq:expectation_over_loss}, we minimize the expected loss increases of old data incurred by the model updates at the $t^\text{th}$ incremental learning stage, which is given by
\begin{equation}
\displaystyle{\min_{M^t}\;\, \sum_{\ell=1}^{L}  \sum_{c=1}^{C_\ell}  \mathbb{E}_{(x,y) \sim \mathcal{P}^{t-1}_{\text{data}}} \Big[ \triangle \mathcal{L} (Z'_{\ell,c})} \Big]^2.
	\label{eq:expectation_over_approx_loss}
\end{equation}
Minimizing Eq.~\eqref{eq:expectation_over_approx_loss} via the standard stochastic gradient descent method requires the additional backpropagation process in $M^{t-1}(\cdot)$ unless we store the gradient of the feature maps with respect to $M^{t-1}(\cdot)$ for each example, which increases computational cost or memory overhead substantially.
We will discuss how to address this issue next.

\subsection{Practical Objective Function}
\label{sub:discrepancy}   
According to Eq.~\eqref{eq:expectation_over_approx_loss}, the distribution shift in each channel affects the loss function with a different magnitude. 
To avoid storing a large number of feature maps or performing additional backpropagation procedures, we derive an upper bound of the objective function in Eq.~\eqref{eq:expectation_over_approx_loss} as
\begin{align}
 &\mathbb{E}  \Big[\triangle \mathcal{L}(Z'_{\ell,c})\Big]  \nonumber \\
	 &= \mathbb{E}\Big[  \langle \nabla_{Z_{\ell,c}} \mathcal{L}(G_\ell(Z_\ell), y), Z'_{\ell,c} - Z_{\ell,c} \rangle _F  \Big]  
	\label{eq:expectation_over_loss_prime_upper_bound_0} \\ 
	&\leq \mathbb{E} \Big[ \| \nabla_{Z_{\ell,c}} \mathcal{L}(G_\ell(Z_\ell), y) \|_F \cdot \| Z'_{\ell,c} - Z_{\ell,c}\| _F \Big] 
	\label{eq:expectation_over_loss_prime_upper_bound_1} \\ 
	&\leq \sqrt{\mathbb{E}\Big[ \| \nabla_{Z_{\ell,c}} \mathcal{L}(G_\ell(Z_\ell), y) \|_F ^2 \Big] \cdot \mathbb{E}\Big[\| Z'_{\ell,c} - Z_{\ell,c}\| _F ^2\Big]}, 
	\label{eq:expectation_over_loss_prime_upper_bound_2} 
\end{align}
where the expectations are taken over $\mathcal{P}^{t-1}_{\text{data}}$.
Refer to the supplementary file for the detailed derivation.

Then, we define a more practical objective function, the upper bound of Eq.~\eqref{eq:expectation_over_approx_loss}, as follows:
\begin{align}
\displaystyle{\min_{M^t}}\; \sum_{\ell=1}^{L} \sum_{c=1}^{C_\ell} \,& I^t_{\ell,c} \mathbb{E}_{x \sim \mathcal{P}^{t-1}_{\text{data}}}\Big[\| Z'_{\ell,c} - Z_{\ell,c}\| _F ^2\Big],\label{eq:expectation_over_loss_prime_upper_bound_3} 
\end{align}
where the importance is given by 
\begin{equation}
I^t_{\ell,c} := \mathbb{E}_{(x,y)\sim \mathcal{P}^{t-1}_{\text{data}}} \Big[ \| \nabla_{Z_{\ell,c}} \mathcal{L}(G_\ell(Z_\ell), y) \|_F ^2 \Big], \nonumber
\end{equation}
which is estimated by a Monte Carlo integration at the previous stage using $M^{t-1}(\cdot)$ and employed for training $M^t$.
Note that we need marginal cost for storing $I^t_{\ell,c}$ since it is just a scalar value corresponding to each channel.
\begin{figure}[t!]
\centering
\includegraphics[width=0.95\linewidth]{./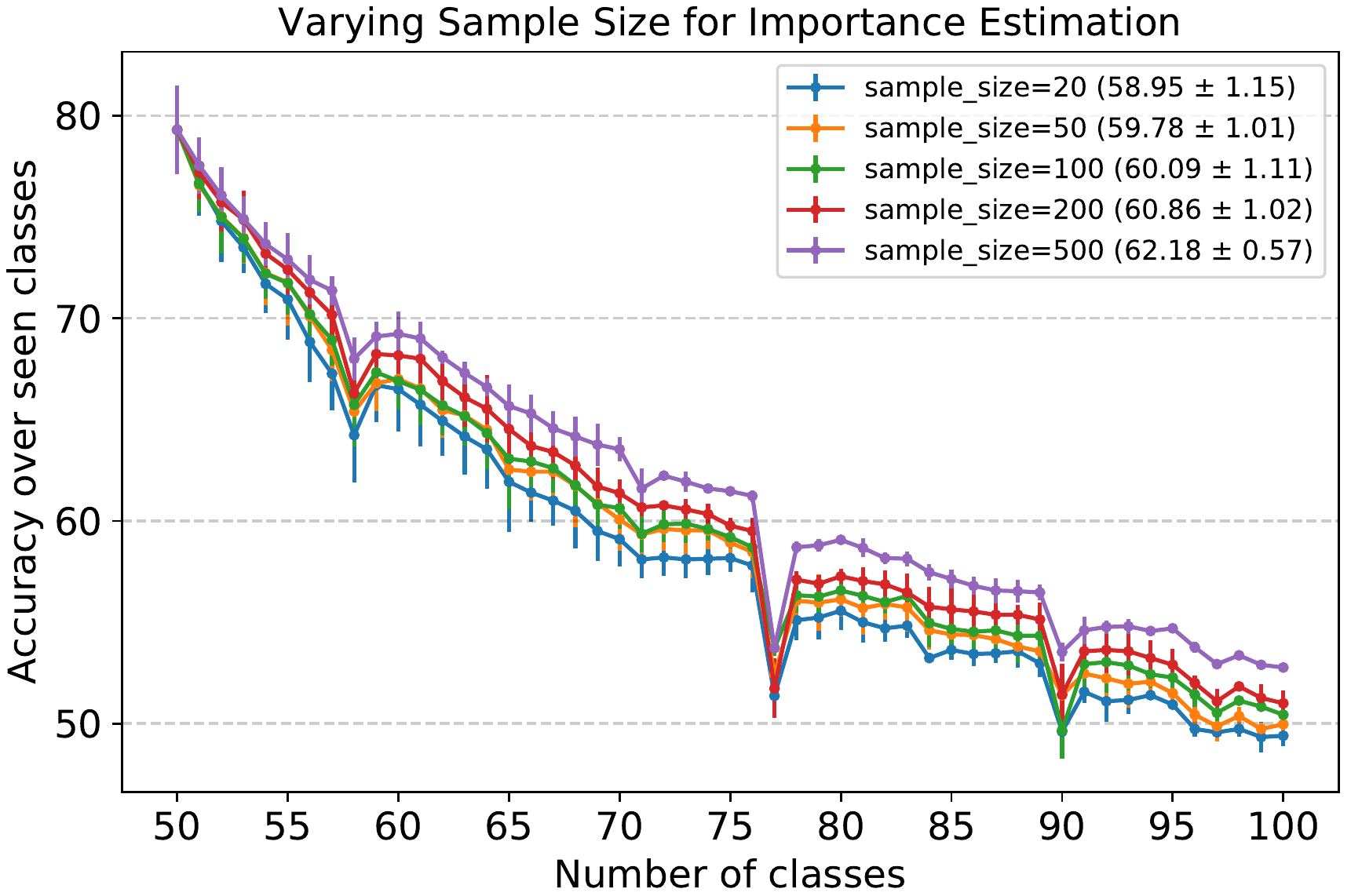}
\caption{Incremental accuracy on CIFAR100, 50 incremental stages, with ResNet-32 over three different class orders by varying different sample sizes for importance estimation. The legend indicates the mean and standard deviation of the results.}
\label{fig:gradient_sample}
\end{figure}
The importance $I^t_{\ell,c}$ can be interpreted as a weight factor because the larger value of $I^t_{\ell,c}$ incurs the bigger changes in the loss given by the same perturbation in the feature map.  
Furthermore, Figure~\ref{fig:gradient_sample} shows that a larger sample sizes estimates more reliable importance and consequently leads to better generalization performance. 

\subsection{More Robust Objective Function}
\label{sub:more_practical}   
Although the optimization of the objective function in Eq.~\eqref{eq:expectation_over_loss_prime_upper_bound_3} is effective to maintain the representations of the examples in the previous tasks, we usually have a limited number of exemplars and face critical challenge in reducing catastrophic forgetting.
Therefore, we also adopt the data in the current task together with the ones in the previous tasks, which leads to the following optimization problem:
\begin{align}
\displaystyle{\min_{M^t}}\; \sum_{\ell=1}^{L} \sum_{c=1}^{C_\ell} \,& I^t_{\ell,c}
 \mathbb{E}_{x\sim \mathcal{P}^{t}_{\text{data}}} \Big[\| Z'_{\ell,c} - Z_{\ell,c}\| _F ^2 \Big],
 \label{eq:expectation_over_loss_prime_upper_bound_4}
\end{align}
where its only difference from Eq.~\eqref{eq:expectation_over_loss_prime_upper_bound_3} is the distribution for the computation of the expectation.
Note that the use of the examples in the current task for the purpose of preserving the representations in the previous stages is also motivated by other knowledge distillation approaches~\cite{li2017learning, dhar2019learning, hou2019learning, douillard2020podnet}.
This strategy is also supported by the following proposition.

\begin{proposition}
\vspace{0.2cm}
Let $\mathcal{Y}^t_{\text{old}}$, $\mathcal{Y}^t_{\text{new}}$, and $q_{\text{data}}^t$  be a set of previously observed classes ($\mathcal{Y}^{t-1}$), the set of newly observed classes at $t^\text{th}$ incremental learning stage, and the distribution of the data in the current task,  respectively.
Let the distribution of labels in $\mathcal{Y}^t$ be a mixture of two prior distributions corresponding to $\mathcal{Y}_{\text{old}}^t$ and $\mathcal{Y}_{\text{new}}^t$ with weights $\phi_{\text{old}}^t$ and $\phi_{\text{new}}^t$, respectively.
Assuming that the data generating process for each class, $\mathcal{P}_\text{data}^t(x|y)$, does not change over the incremental stages,
the objective in Eq.~\eqref{eq:expectation_over_loss_prime_upper_bound_3} with the weight factor $\phi_\text{old}^t$ is bounded above by that in  Eq.~\eqref{eq:expectation_over_loss_prime_upper_bound_4}.
\label{propostion:1}
\end{proposition}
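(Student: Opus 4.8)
The plan is to rewrite the input marginal $\mathcal{P}^t_{\text{data}}$ of the current stage as a mixture whose old-class component is exactly $\mathcal{P}^{t-1}_{\text{data}}$, and then to discard the (nonnegative) new-class component.

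First I would apply the law of total probability over labels. By hypothesis, the label distribution at stage $t$ over $\mathcal{Y}^t = \mathcal{Y}^t_{\text{old}} \cup \mathcal{Y}^t_{\text{new}}$ is a mixture, with weights $\phi^t_{\text{old}}$ and $\phi^t_{\text{new}}$, of a prior supported on $\mathcal{Y}^t_{\text{old}} = \mathcal{Y}^{t-1}$ and a prior supported on $\mathcal{Y}^t_{\text{new}}$; and by the stationarity assumption $\mathcal{P}^t_{\text{data}}(x \mid y) = \mathcal{P}^{t-1}_{\text{data}}(x \mid y)$ for every $y \in \mathcal{Y}^{t-1}$. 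Combining these --- and using that the prior restricted to the old classes is the one inherited from stage $t-1$ --- the marginal factors as
\begin{align}
\mathcal{P}^t_{\text{data}}(x) = \phi^t_{\text{old}}\,\mathcal{P}^{t-1}_{\text{data}}(x) + \phi^t_{\text{new}}\,q^t_{\text{data}}(x). \nonumber
\end{align}
Consequently, for any measurable $g \ge 0$,
\begin{align}
\mathbb{E}_{x \sim \mathcal{P}^t_{\text{data}}}\!\big[g(x)\big] \;\ge\; \phi^t_{\text{old}}\,\mathbb{E}_{x \sim \mathcal{P}^{t-1}_{\text{data}}}\!\big[g(x)\big], \nonumber
\end{align}
because the dropped term $\phi^t_{\text{new}}\,\mathbb{E}_{x \sim q^t_{\text{data}}}[g(x)]$ is nonnegative.

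Next I would instantiate $g(x) = \| Z'_{\ell,c} - Z_{\ell,c} \|_F^2 \ge 0$, multiply both sides of the last inequality by the nonnegative importance weight $I^t_{\ell,c}$, and sum over all layers $\ell$ and channels $c$. The left-hand side is then exactly the objective of Eq.~\eqref{eq:expectation_over_loss_prime_upper_bound_4}, while the right-hand side is $\phi^t_{\text{old}}$ times the objective of Eq.~\eqref{eq:expectation_over_loss_prime_upper_bound_3}, which is the claimed bound.

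The one step requiring care --- and the part I would spell out in full --- is the mixture identity for $\mathcal{P}^t_{\text{data}}$: one must argue that stationarity of the class-conditional generators together with preservation of the old-class prior makes the old-class component of the stage-$t$ mixture literally equal to $\mathcal{P}^{t-1}_{\text{data}}$, rather than merely close to it. Everything afterwards is an immediate consequence of the nonnegativity of squared Frobenius norms and of the weights $\phi^t_{\text{new}}$ and $I^t_{\ell,c}$, so no real computation is involved.
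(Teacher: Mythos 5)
Your proposal is correct and follows essentially the same route as the paper's proof: decompose the stage-$t$ marginal as $\mathcal{P}^t_{\text{data}}(x) = \phi^t_{\text{old}}\,\mathcal{P}^{t-1}_{\text{data}}(x) + \phi^t_{\text{new}}\,q^t_{\text{data}}(x)$ using the label-mixture assumption and stationary class conditionals, then drop the nonnegative new-class term (the paper phrases this by adding $\tfrac{\phi^t_{\text{new}}}{\phi^t_{\text{old}}}\mathbb{E}_{q^t_{\text{data}}}[\Delta Z_{\ell,c}]$ and multiplying through by $\phi^t_{\text{old}}$, which is the same step). Your final summation over $\ell,c$ with the nonnegative weights $I^t_{\ell,c}$ is also consistent with how the paper applies the per-channel inequality to the full objectives.
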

\begin{proof}
By the definition of the label distribution for $\mathcal{Y}^t$, the class prior distribution $\mathcal{P}_{\text{data}}^t(y)$ is given by 
\begin{align}
\mathcal{P}_{\text{data}}^t (y) &= \mathcal{P}_{\text{data}}^t (y| \mathcal{Y}_\text{old}^t )  \phi_\text{old}^t +\mathcal{P}_{\text{data}}^t (y|\mathcal{Y}_\text{new}^t ) \phi_\text{new}^t \nonumber \\
&= \mathcal{P}_{\text{data}}^{t-1} (y) \phi_\text{old}^t + q_\text{data}^t(y)  \phi_\text{new}^t. 
\label{eq:proposition_0}
\end{align}
Based on this equation, $\mathcal{P}_\text{data}^t(x)$ can be expressed as
\begin{align}
&\mathcal{P}_{\text{data}}^t (x)  \nonumber \\
&= \sum_{y \in \mathcal{Y}_\text{old}^t} \mathcal{P}_{\text{data}}^t (y) \mathcal{P}_{\text{data}}^t (x|y) + \sum_{y \in \mathcal{Y}_\text{new}^t} \mathcal{P}_{\text{data}}^t (y) \mathcal{P}_{\text{data}}^t (x|y) \nonumber \\
&=  \phi_\text{old}^t \cdot \sum_{y \in \mathcal{Y}^{t-1}} \mathcal{P}_{\text{data}}^{t-1} (y) \mathcal{P}_{\text{data}}^{t-1} (x|y) + \nonumber \\ 
 & \quad \, \phi_\text{new}^t \cdot \sum_{y \in \mathcal{Y}^t_{\text{new}}} q_\text{data}^t(y) q_\text{data}^t(x|y) \nonumber \\
&=   \phi_\text{old}^t \cdot \mathcal{P}_{\text{data}}^{t-1} (x) + \phi_\text{new}^t \cdot q_\text{data}^t(x). \label{eq:proposition_1} 
\end{align}
Then, by denoting $\Delta Z_{\ell, c} := \|Z'_{\ell,c} - Z_{\ell,c}\|_F ^2$, which is always non-negative, we obtain the following inequality:
\begin{align}
&\mathbb{E}_{x \sim \mathcal{P}_{\text{data}}^{t-1}}\Big[ \Delta Z_{\ell,c} \Big] \nonumber \\
&\leq
\mathbb{E}_{x \sim \mathcal{P}_{\text{data}}^{t-1}}\Big[ \Delta Z_{\ell,c} \Big] 
+  
\frac{\phi_\text{new}^t}{\phi_\text{old}^t}  \mathbb{E}_{x \sim q_{\text{data}}^{t}}\Big[ \Delta Z_{\ell,c} \Big]. \label{eq:proposition_2}
\end{align}
By multiplying both sides of Eq.~\eqref{eq:proposition_2} by $\phi_\text{old}^t$ and using Eq.~\eqref{eq:proposition_1}, we have   
\begin{align}
&\phi_\text{old}^t\mathbb{E}_{x \sim \mathcal{P}_{\text{data}}^{t-1}}\Big[ \Delta Z_{\ell,c} \Big]  \nonumber \\ 
&\leq
\phi_\text{old}^t\mathbb{E}_{x \sim \mathcal{P}_{\text{data}}^{t-1}}\Big[ \Delta Z_{\ell,c} \Big] 
+  
\phi_\text{new}^t \mathbb{E}_{x \sim q_{\text{data}}^{t}}\Big[ \Delta Z_{\ell,c} \Big] \nonumber \\
&=\mathbb{E}_{x \sim \mathcal{P}_{\text{data}}^{t}}\Big[ \Delta Z_{\ell,c} \Big],  \label{eq:proposition_3}
\end{align}
which concludes the proof. 
\end{proof}
Proposition~\ref{propostion:1} implies that the optimization procedure of Eq.~\eqref{eq:expectation_over_loss_prime_upper_bound_4} minimizes the upper bound of the loss increases over the previous tasks using the examples in the previous and current tasks effectively.
\begin{table*}[ht!]
\caption{Performance comparison between the proposed algorithm, denoted by AFC, and other state-of-the-art methods on CIFAR100. We run experiments using  three different class orders and report their averages and standard deviations. Methods with an asterisk * report the results from our reproductions with their official codes. Red and black bold-faced numbers represent the best and second-best performance in each column.}
\label{tab:Cifar}
\centering
\scalebox{0.90}{
\begin{tabular}{@{}l|cccc@{}}
 \toprule
 & \multicolumn{4}{c}{CIFAR100}\\
  & 50 stages & 25 stages & 10 stages & 5 stages\\
 \multicolumn{1}{r|}{New classes per stage} & 1 & 2 & 5 & 10\\
 \midrule
iCaRL~\cite{rebuffi2017icarl} & 44.20\mypm{}0.98    & 50.60\mypm{}1.06  & 53.78\mypm{}1.16  & 58.08\mypm{}0.59\\ 
 BiC~\cite{wu2019large} & 47.09\mypm{}1.48 & 48.96\mypm{}1.03 & 53.21\mypm{}1.01  & 56.86\mypm{}0.46\\
 UCIR\,{\scriptsize (NME)}~\cite{hou2019learning}  & 48.57\mypm{}0.37 & 56.82\mypm{}0.19 & 60.83\mypm{}0.70 & 63.63\mypm{}0.87\\
 UCIR\,{\scriptsize (CNN)}~\cite{hou2019learning}& 49.30\mypm{}0.32 & 57.57\mypm{}0.23 & 61.22\mypm{}0.69 & 64.01\mypm{}0.91\\
 Mnemonics~\cite{liu2020mnemonics} & - & 60.96\mypm{}0.72 & 62.28\mypm{}0.43 & 63.34\mypm{}0.62 \\
 GDumb*~\cite{prabhu2020gdumb} & 59.76\mypm{}1.49 & 59.97\mypm{}1.51 & 60.24\mypm{}1.42 & 60.70\mypm{}1.53 \\
 PODNet\,{\scriptsize (NME)}~\cite{douillard2020podnet} & 61.40\mypm{}0.68 & 62.71\mypm{}1.26 & 64.03\mypm{}1.30 & 64.48\mypm{}1.32\\
 PODNet\,{\scriptsize (CNN)}~\cite{douillard2020podnet} &  57.98\mypm{}0.46 & 60.72\mypm{}1.36 & 63.19\mypm{}1.16 & 64.83\mypm{}0.98\\
  PODNet\,{\scriptsize (NME)}*~\cite{douillard2020podnet} & 56.78\mypm{}0.41 & 59.54\mypm{}0.66 & 63.27\mypm{}0.69 & 65.32\mypm{}0.65\\
 PODNet\,{\scriptsize (CNN)}*~\cite{douillard2020podnet} & 57.86\mypm{}0.38 & 60.51\mypm{}0.62 & 62.78\mypm{}0.78 & 64.62\mypm{}0.65\\
  GeoDL*~\cite{simon2021learning} & 52.28\mypm{}3.91 & 60.21\mypm{}0.46 & 63.61\mypm{}0.81 & 65.34\mypm{}1.05 \\ 
 SSIL*~\cite{ahn2021ss} & 53.64\mypm{}0.77 & 58.02\mypm{}0.79 & 61.52\mypm{}0.44 & 63.02\mypm{}0.59 \\
 \hdashline
 AFC\,{\scriptsize (NME)}& \red{62.58\mypm{}1.02} & \red{64.06\mypm{}0.73} & \black{64.29\mypm{}0.92 }& \black{65.82\mypm{}0.88}\\
 AFC\,{\scriptsize (CNN)}& \black{62.18\mypm{}0.57} & \black{63.89\mypm{}0.93} & \red{64.98\mypm{}0.87} & \red{66.49\mypm{}0.81}\\
 \bottomrule
\end{tabular}
}
\end{table*}

\subsection{Final Loss Function}
The final goal of $M^t(\cdot)$ is to minimize the combination of the classification loss and the discrepancy loss as
\begin{equation}
	\mathcal{L}^t_{\text{total}} = \mathcal{L}^t_{\text{cls}} + \lambda_\text{disc}  \cdot \lambda^t  \mathcal{L}^{t}_\text{disc},
	\label{eq:final_loss}
\end{equation}
where $\lambda_\text{disc}$ is a hyperparameter and an adaptive weight, $\lambda^t$, is set to $\sqrt{\frac{n^t}{n^t-n^{t-1}}}$ as proposed in~\cite{hou2019learning, douillard2020podnet}. 
For the classification loss $\mathcal{L}_\text{cls}^t$, we adopt a local similarity classifier (LSC)~\cite{douillard2020podnet} by adaptively aggregating the cosine similarities obtained from multiple class embedding vectors, which is given by
\begin{equation}
 \hat{y}_k = \sum_{j}s_{k,j}\,\langle \theta_{k,j},h \rangle\,,
\quad
s_{k,j} =\frac{\exp\,\langle \theta_{k,j},h \rangle}{\sum_{i} \exp\,\langle \theta_{k,i},h \rangle}\,, 
	\label{eq:podnet_local_similarity}
\end{equation}
where $\theta_{k,j}$ is the $j^{\text{th}}$ normalized class embedding vector for the $k^{\text{th}}$ class and $\hat{y}_k$ is the score for the $k^{\text{th}}$ class. 
Then, the classification loss $\mathcal{L}_\text{cls}^t$ is defined with a label $g^{(b)}$ as
\begin{align}
	\mathcal{L}^t_{\text{cls}} &= \frac{1}{B} \sum_{b=1}^B \bigg[ -\log \frac{\exp \,(\eta(\hat{y}_{g^{(b)}}^{(b)} - \delta))}{\sum_{i \neq g^{(b)}} \exp\, (\eta \hat{y}_i^{(b)}) }\bigg]_{+},
	\label{eq:LSC_loss}  
\end{align}
where $\eta$ is a learnable scaling parameter, $\delta$ is a constant to encourage a larger inter-class separation, $B$ is the mini-batch size, and $[\cdot]_+$ denotes the ReLU activation function.
Also, the discrepancy loss, $\mathcal{L}_\text{disc}^t$, is given by
\begin{align}
\mathcal{L}^{t}_\text{disc}& : =  \frac{1}{B}\sum_{b=1}^B \sum_{\ell=1}^L  \sum_{c=1}^{C_\ell} \tilde{I}^t_{\ell,c}\| Z^{'(b)}_{\ell,c} - Z_{\ell,c}^{(b)} \| _F ^2, \label{eq:old_new_example}
\end{align}
where $\tilde{I}^t_{\ell,c}$ is introduced to balance the scale of the importances across layers, which is given by
\begin{equation}
	\tilde{I}^t_{\ell,c} = \frac{I^t_{\ell,c}}
	{\frac{1}{C_\ell}\sum_{c=1}^{C_{\ell}} I^t_{\ell,c}}.
	\label{eq:normalize_importance}
\end{equation}
Algorithm~\ref{alg:importance_estimation} presents the details of the importance estimation procedure after training $M^t(\cdot)$.

\begin{algorithm}[t]
  \caption{Normalized Importance Estimation}
  \label{alg:importance_estimation}
  \begin{algorithmic}[1]
  \State{\bf Input:} current model $M^{t}(\cdot)$, current dataset $D^{t}$, exemplar sets for the previous tasks $E^{t}$  
        \State {\bf Initialization:} $I^t_{\ell,c} \gets 0 $ and $D \gets D^{t} \cup E^{t}$    
         \For{all $(x, y) \in D$}
        \State Compute $\mathcal{L}^t_{\text{cls}}$ by Eq.~\eqref{eq:LSC_loss} 
        \State Perform backward through $M^t(\cdot)$
        \For{$\ell \in 1,2, \ldots, L$} 
         \State Update the importance: 
          \State $I^t_{\ell,c} \gets I^t_{\ell,c}  + \| \nabla_{Z_{\ell,c}} \mathcal{L}^t_{\text{cls}} \|^2_F$
        \EndFor 	
         \EndFor
         \For{$\ell \in 1,2, \ldots, L$} 
          \State Normalize the importance $I^t_{\ell,c}$ by Eq.~\eqref{eq:normalize_importance}
           \EndFor
         \State{\bf Output:} Normalized importance $\{ \tilde{I}^t_{\ell,c}\}$ employed for training $M^{t+1}(\cdot)$
  \end{algorithmic}
\end{algorithm}

\subsection{Management of Exemplar Set}
To maintain the exemplar set for each class after training $M^t(\cdot)$, we employ the nearest-mean-of-examplers selection rule~\cite{rebuffi2017icarl} as in other class incremental learning methods. 
For inference, we employ two methods; one is the nearest-mean-of-exemplars classification rule~\cite{rebuffi2017icarl} and the other is based on the classifier probabilities as discussed in \cite{hou2019learning,douillard2020podnet}, which we call NME and CNN, respectively.

\subsection{Discussion about Computational Cost}
As in most approaches based on knowledge distillation, we perform 2 forward passes and 1 backward pass during training.
Different from the previous approaches, at the end of the training phase of each task, we update the importance by performing a pair of forward and backward pass computations using the old exemplars and examples in the current task as presented in Algorithm~\ref{alg:importance_estimation}.
Therefore, the additional computation of our method is only for the importance update, which happens only once per task; it is negligible compared to the training cost of each task.

\section{Experiments}
\label{sec:experiments}
This section compares our algorithm with the previous approaches on the standard datasets, and analyzes the experimental results thorough ablation studies.

\subsection{Dataset}
\label{sec:dataset} 
We employ the standard datasets, CIFAR100~\cite{cifar09} and ImageNet100/1000~\cite{ILSVRC15}, for evaluation of class incremental learning algorithms.
The CIFAR100 dataset contains 50,000 and 10,000 32 $\times$ 32 color images for training and testing.
The ImageNet1000 dataset consists of 1,281,167 images for training and 50,000 images for validation across 1,000 classes.
ImageNet100 is a subset of ImageNet1000 with randomly selected 100 classes.
For image preprocessing and class order, we follow the protocol in PODNet~\cite{douillard2020podnet} for fair comparisons.
Except the initial task containing half of the total classes for all datasets, the remaining classes are evenly distributed across tasks.
After each incremental stage, we store 20 images per class and construct exemplar sets unless stated otherwise.

\subsection{Implementation Details}
\label{sec:details}
The proposed method is implemented based on the publicly available official code of PODNet and tested using a NVIDIA Tesla V100 GPU, without modifying any experimental settings and network structures for fair comparisons.
For all datasets, we adopt the SGD optimizer with a momentum of 0.9, an initial learning rate of 0.1 with a cosine annealing schedule, and  a batch size of 128.
We train a 32-layer ResNet~\cite{he2016deep} for 160 epochs with a weight decay of 0.0005 for CIFAR100, and an 18-layer ResNet~\cite{he2016deep} for 90 epochs with a weight decay of 0.0001 for ImageNet datasets, where all compared methods also use the same backbones. 
Following the heuristics proposed in~\cite{zagoruyko2016paying, douillard2020podnet}, we normalize each feature map using its Frobenius norm to compute the discrepancy loss to further improves performance.
We select the same layers with PODNet for knowledge distillation.
In case of $\lambda_\text{disc}$,  we set it to $4.0$ for CIFAR100 and $10.0$ for ImageNet.

For comparisons, we report \emph{average incremental accuracy}~\cite{rebuffi2017icarl}---the average of the accuracies over seen classes across all incremental stages, where the results of other methods except GeoDL~\cite{simon2021learning}, DDE~\cite{hu2021distilling}, and SSIL~\cite{ahn2021ss} are retrieved from~\cite{douillard2020podnet}  if not specified otherwise.
Our algorithm is referred to as Adaptive Feature Consolidation (AFC) to present experimental results in this section.

\begin{table*}[!t]
\caption{Performance comparison between AFC and other state-of-the-art algorithms on ImageNet100 and ImageNet1000.}
\label{tab:Imagenet}
\centering
\scalebox{0.90}{
\begin{tabular}{@{}l|cccc|cc@{}}
 \toprule
 & \multicolumn{4}{c|}{ImageNet100} & \multicolumn{2}{c}{ImageNet1000} \\
  & 50 stages & 25 stages & 10 stages & 5 stages & 10 stages & 5 stages\\
 \multicolumn{1}{c|}{New classes per stage} & 1 & 2 & 5 & 10 & 50 & 100\\
 \midrule
 iCaRL~\cite{rebuffi2017icarl}         & 54.97 & 54.56 & 60.90  & 65.56 & 46.72   & 51.36 \\
 BiC~\cite{wu2019large} & 46.49 & 59.65 & 65.14  & 68.97 & 44.31   & 45.72\\
 UCIR\,{\scriptsize (NME)}~\cite{hou2019learning}    & 55.44 & 60.81 & 65.83  & 69.07 & 59.92  & 61.56 \\
 UCIR\,{\scriptsize (CNN)}~\cite{hou2019learning}     & 57.25 & 62.94 & 70.71  & 71.04 & 61.28   & 64.34 \\
 UCIR\,{\scriptsize (CNN)} + DDE~\cite{hu2021distilling} & - & - & 70.20 & 72.34 & \black{65.77} & \black{67.51} \\
 Mnemonics~\cite{liu2020mnemonics} & - & 69.74 & 71.37 & 72.58 & 63.01 & 64.54 \\
 PODNet\,{\scriptsize (CNN)}~\cite{douillard2020podnet}                & \black{62.48} & 68.31 & 74.33 & 75.54 & 64.13 & 66.95 \\
PODNet\,{\scriptsize (CNN)}~\cite{douillard2020podnet} + DDE~\cite{hu2021distilling} & - & - & \black{75.41} & \black{76.71} & 64.71 &66.42 \\ 
 GeoDL~\cite{simon2021learning} & - & \black{71.72} & 73.55 & 73.87 & 64.46 &65.23 \\
\hdashline 
AFC\,{\scriptsize (CNN)}                & \red{72.08} & \red{73.34} & \red{75.75} & \red{76.87} & \red{67.02} & \red{68.90}\\
 \bottomrule
\end{tabular}
}
\end{table*}
%
\subsection{Results on CIFAR100}
\label{sec:main_cifar100}
We test the proposed method, AFC, in various class incremental learning scenarios.
Table~\ref{tab:Cifar} clearly shows that the proposed approach achieves the highest accuracy in all the tested numbers of incremental stages.
Note that the reproduced results of PODNet with NME inference are worse than the reported ones, even though we use their official code.
Compared with the reproduced PODNet and the other techniques, the performance gap becomes larger as the number of stages grows, which implies that the proposed method is more robust to challenging scenarios than the previous methods. 
AFCs with NME and CNN for inference outperform the reported results of PODNet in \cite{douillard2020podnet} and achieve the state-of-the-art performance.
\begin{figure}[t!]
\centering
\includegraphics[width=0.8\linewidth]{./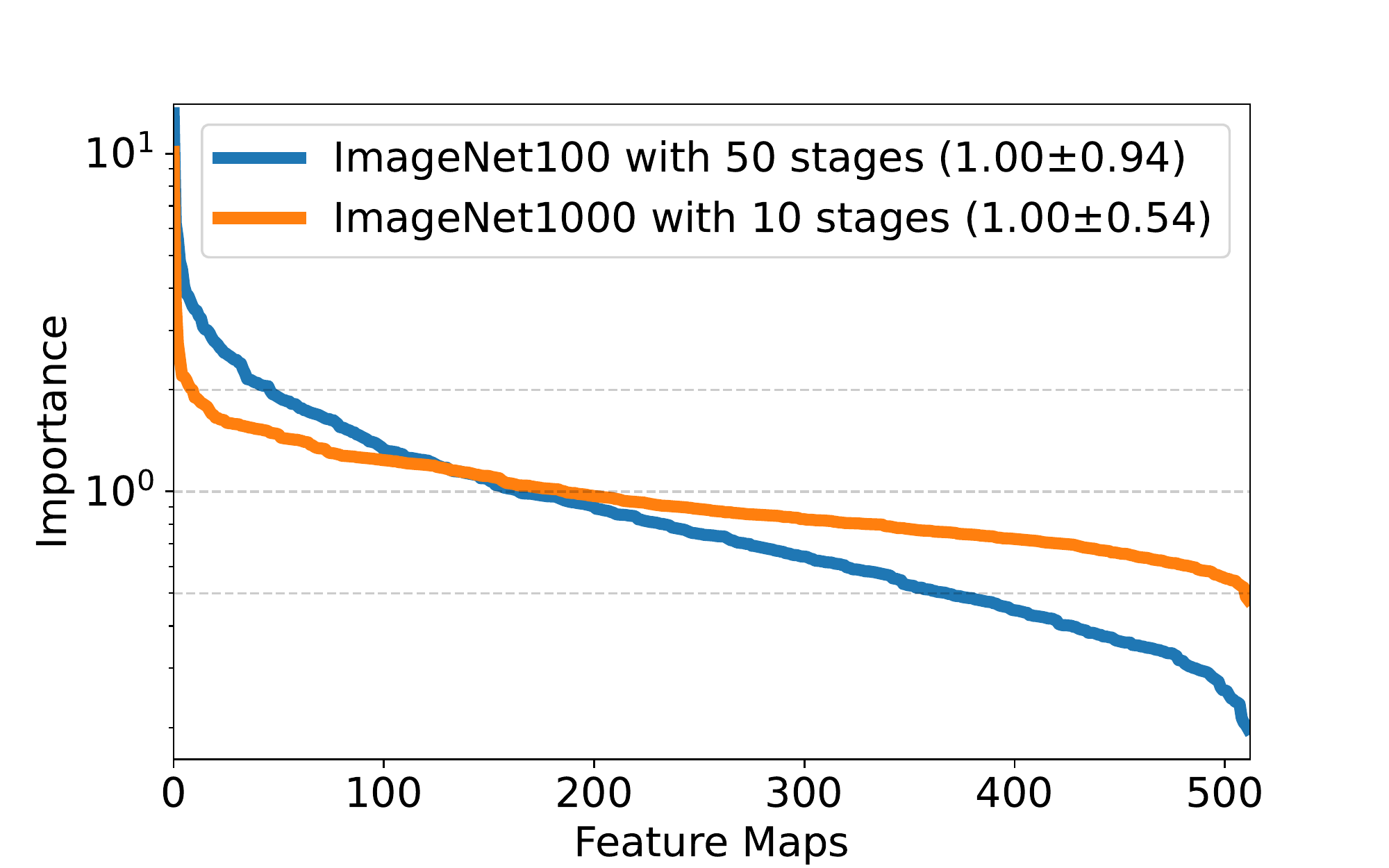}
\caption{The sorted importance of the feature maps in the last block on ImageNet100 with 50 stages, and ImageNet1000 with 10 stages.}
\label{fig:imagenet_importance}
\end{figure}
\begin{table}[!t]
\caption{Sensitivity analysis regarding the memory budget per class on CIFAR100 with 50 stages.
  AFC with Eq.~\eqref{eq:expectation_over_loss_prime_upper_bound_3} optimizes the discrepancy loss over the only exemplar sets without using the data in the current task.}
  \label{tab:ablation_memorysize}
 
\centering
  \scalebox{0.90}{
 \begin{tabular}{@{}lccccc@{}}
 \toprule
Memory budget per class & 5     & 10    & \textbf{20}    & 50    \\
 \midrule
iCaRL \cite{rebuffi2017icarl}      & 16.44 & 28.57 & 44.20 & 48.29  \\
BiC~\cite{wu2019large}       & 20.84  & 21.97  & 47.09  & 55.01  \\
UCIR\,{\scriptsize (CNN)}~\cite{hou2019learning} & 22.17 & 42.70 & 49.30 & 57.02 \\
PODNet\,{\scriptsize (CNN)}~\cite{douillard2020podnet}  & 35.59 & 48.54 & 57.98 & 63.69  \\
\hdashline
AFC\,{\scriptsize (CNN)} with \eqref{eq:expectation_over_loss_prime_upper_bound_3}   & \black{40.37} & \black{55.24} & \black{61.73} & \red{65.10} \\
AFC\,{\scriptsize (CNN)}    & \red{44.66} & \red{55.78} & \red{62.18} & \black{65.07} \\
\bottomrule
\end{tabular}
}
\end{table}
%
\subsection{Results on ImageNet}
\label{sec:main_imagenet}
Table~\ref{tab:Imagenet} presents the results from all algorithms including AFC on ImageNet100 and ImageNet1000 our algorithm; AFC shows outstanding performance compared to all the compared methods.
Moreover, AFC exceeds the previous methods by large margins for 50 stages on ImageNet100, which is more prone to suffer from catastrophic forgetting.
These results imply that the proposed approach has great potential for large-scale problems.

\subsection{Analysis and Ablation Studies}
\label{sec:analysis}
We analyze the estimated importance and discuss the effects of memory size for exemplar sets, feature distillation methods, varying initial task sizes, and discrepancy loss. 

\vspace{-0.2cm}
\paragraph{Importance visualization}
Figure~\ref{fig:imagenet_importance} visualizes the importances of individual feature maps in the last building block, which are sorted after training the final task.
It demonstrates that the standard deviation on ImageNet1000 is smaller than that on ImageNet100.
This is because more features are needed to effectively learn the knowledge from the examples in ImageNet1000  since the dataset is more challenging. 
On the other hand, although it is not shown in the figure, we also observe the relatively higher standard deviation in the early stages for both datasets, which is partly because the features in the early stages are less critical to the loss.
\begin{table}[!t]
  \caption{Performance comparison of feature distillation approaches on CIFAR100 with 50 stages.} 
\label{tab:feature_map}
\centering
\scalebox{0.9}{
\begin{tabular}{@{}lcc@{}} 
 \toprule
 Inference methods    & NME & CNN\\
 \midrule
 Fine-tuning                  & 41.56  & 40.76\\
 Uniform Importance            & 42.21  & 40.81 \\
 GradCam Preserving~\cite{dhar2019learning}    & 41.89  & 42.07\\
 Gram Matrix Preserving~\cite{johnson2016perceptual}   & 41.74 & 40.80 \\
 Channel Preserving~\cite{douillard2020podnet}      & 55.91  & 50.34\\
 Gap Preserving~\cite{douillard2020podnet}              & 57.25  & 53.87\\
PODNet~\cite{douillard2020podnet}         & \black{61.42} & \black{57.64}\\
PODNet*~\cite{douillard2020podnet}         & 57.15 & 57.51\\
\hdashline
 AFC  & \red{62.58} & \red{62.18} \\
 \bottomrule
\end{tabular}
}
\end{table}
%
\begin{figure*}[!ht]
\centering
    \begin{subfigure}{0.45\linewidth}
    \includegraphics[width=1.0\linewidth]{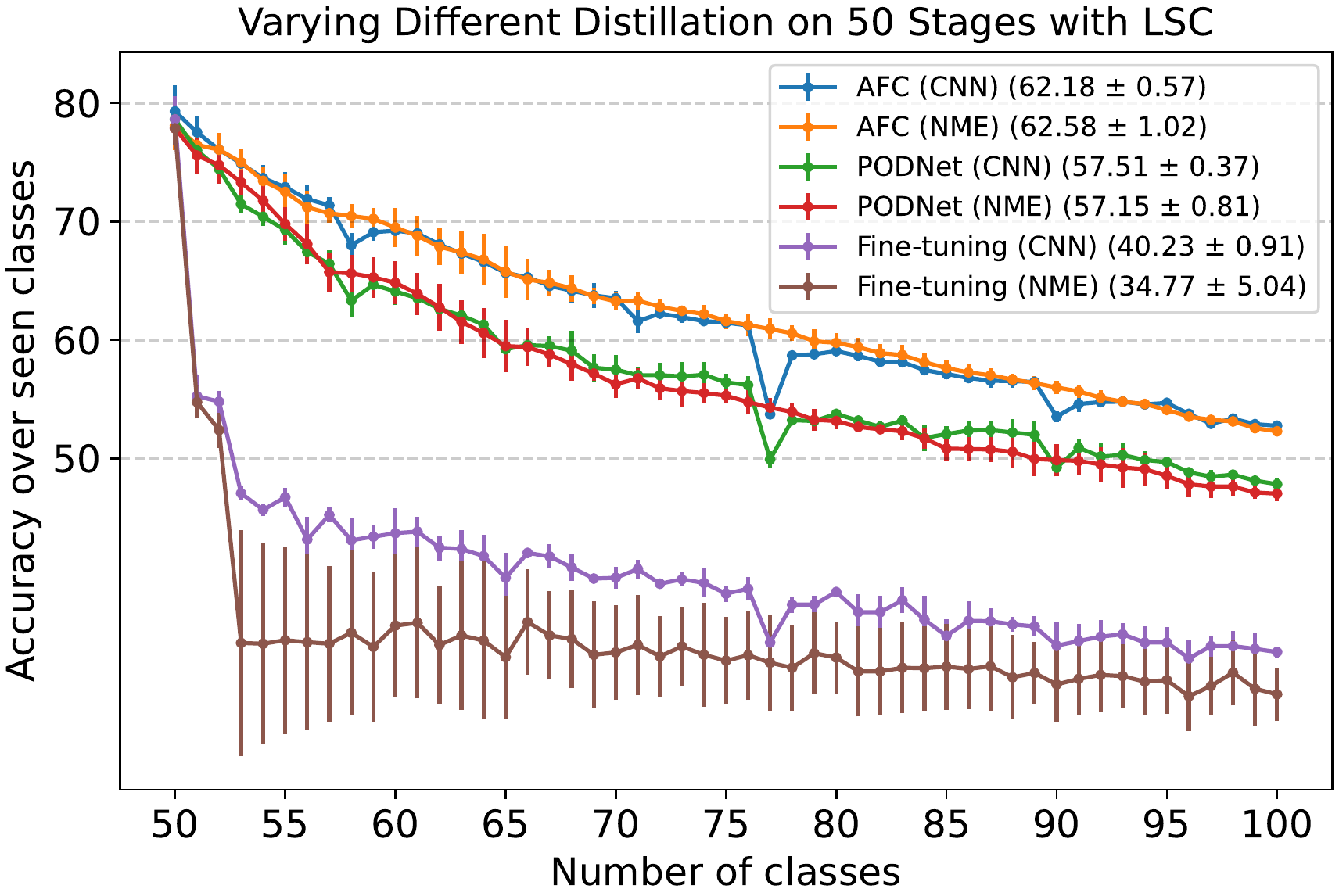} 
    \label{fig:3a}
    \end{subfigure}\hfill
    \begin{subfigure}{0.45\linewidth}
    \includegraphics[width=1.0 \linewidth]{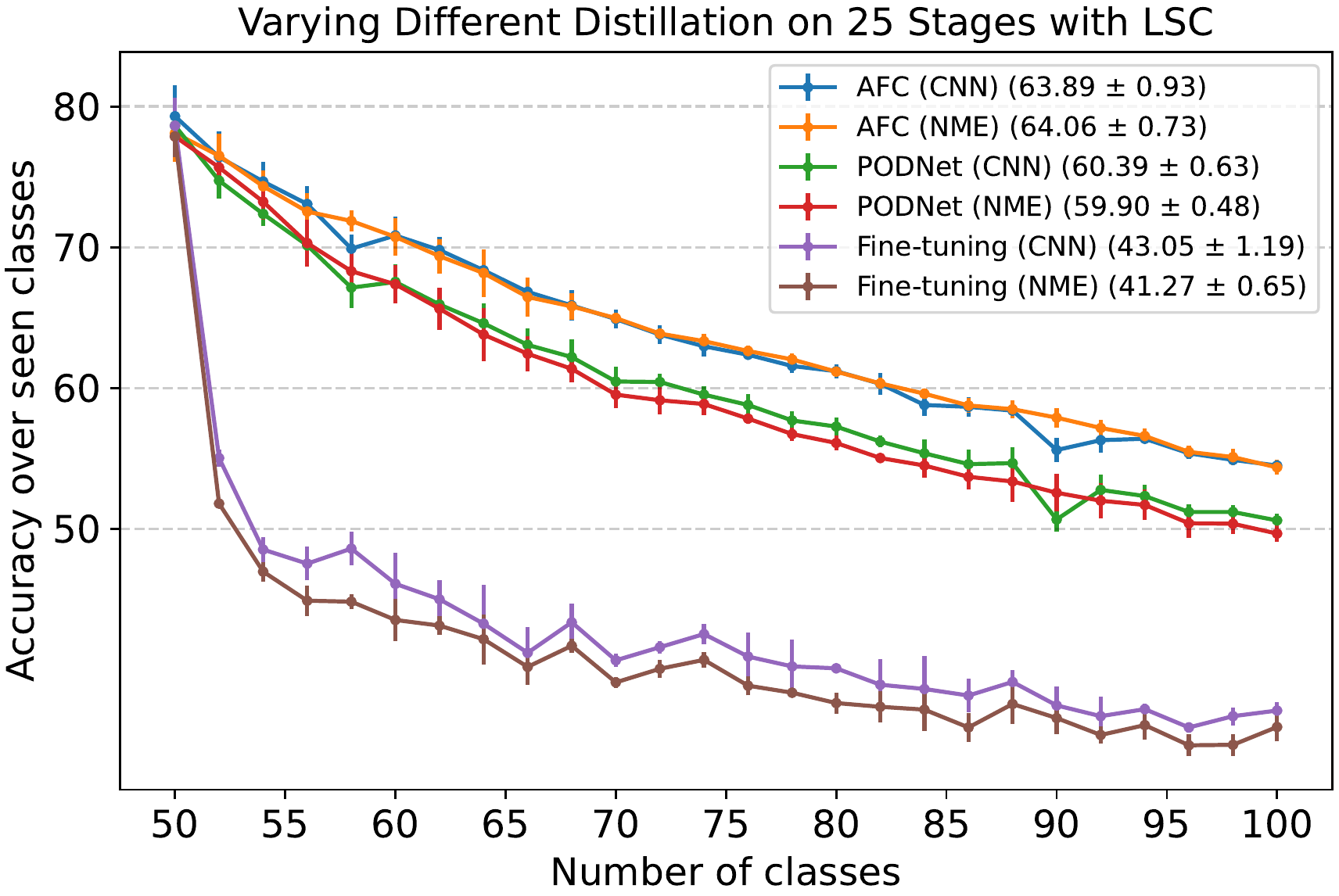}
    \label{fig:3b}
    \end{subfigure}\hfill
    \vspace{-0.2cm}
   \caption{Average accuracies of three different class orders during class incremental learning on CIFAR100 with 50 and 25 stages.}
\vspace{-0.2cm}
    \label{fig:plot_ablation}
\end{figure*} 
\begin{table}[!t]
\caption{Performance comparison between AFC and the state-of-the-art algorithms on CIFAR100 by varying the number of classes in the initial task while each of the remaining tasks only contains a single class.} 
\label{tab:varying_initial_task}
\centering
\scalebox{0.90}{
\setlength\tabcolsep{3.5pt} 
\begin{tabular}{@{}l|cccc@{}}
 \toprule
      & \multicolumn{4}{c}{CIFAR100} \\
 & 80 stages & 70 stages & 60 stages & 50 stages \\
\multicolumn{1}{c|}{Initial task size} & 20 & 30 & 40 & 50\\
 \midrule
 iCaRL \cite{rebuffi2017icarl}     & 41.28 & 43.38 & 44.35 & 44.20\\
BiC \cite{wu2019large}       & 40.95  & 42.27 & 45.18 & 47.09\\
UCIR\,{\scriptsize (NME)} \cite{hou2019learning} & 40.81 & 46.80 & 46.71 & 48.57\\
UCIR\,{\scriptsize (CNN)} \cite{hou2019learning} & 41.69 & 47.85 & 47.51 & 49.30\\
 GDumb*~\cite{prabhu2020gdumb} & \black{52.04} & 55.15 & 57.44 & 59.76\\
PODNet\,{\scriptsize (NME)}~\cite{douillard2020podnet} & 49.03 & 55.30 & 57.89 & 61.40\\
PODNet\,{\scriptsize (CNN)}~\cite{douillard2020podnet}  & 47.68 & 52.88 & 55.42 & 57.98\\
PODNet\,{\scriptsize (NME)}*~\cite{douillard2020podnet}  & 45.88 & 52.18 & 54.22 & 56.78 \\
PODNet\,{\scriptsize (CNN)}*~\cite{douillard2020podnet} & 48.37 & 53.28 & 55.86 & 57.86 \\
\hdashline
AFC\,{\scriptsize (NME)}& 51.31 & \black{57.05} & \black{60.06} & \red{62.58} \\
AFC\,{\scriptsize (CNN)}& \red{52.90} & \red{57.61} & \red{60.27} & \black{62.18} \\
\bottomrule
\end{tabular}
}
\end{table}

\vspace{-0.2cm}
\paragraph{Effect of memory budget}
To assess the effectiveness of the proposed method, we vary the memory budget. 
Table~\ref{tab:ablation_memorysize} illustrates that AFC outperforms the state-of-the-art methods by large margins under several different memory budgets.
AFC optimized using the exemplar sets together with the current data tends to achieve better generalization performance than AFC with Eq.~\eqref{eq:expectation_over_loss_prime_upper_bound_3} when the memory budget is small.
However, when the budget per class increases as large as $50$, AFC with Eq.~\eqref{eq:expectation_over_loss_prime_upper_bound_3} achieves slightly better accuracy, which is probably because Eq.~\eqref{eq:expectation_over_loss_prime_upper_bound_3} leads to a tighter upper bound and the use of the examples in the current task is not particularly helpful.

\vspace{-0.2cm}
\paragraph{Comparison with feature distillation approaches}
We present an experimental result by applying different feature distillation approaches on CIFAR100 with 50 stages.
Table~\ref{tab:feature_map} shows the performance of class incremental learning algorithms based on knowledge distillation, where AFC is compared with 7 different methods including ``Fine-tuning" and ``Uniform Importance" with all the importances set to 1.
According to the results, AFC outperforms all the compared algorithms.
In addition, Figure~\ref{fig:plot_ablation} also implies that AFC is more robust to catastrophic forgetting than PODNet consistently throughout the incremental stages. 

\vspace{-0.2cm}
\paragraph{Results by varying initial task sizes}
\label{appendix:effect_initial_task_size}
We test the proposed method by reducing the number of classes in the initial task while each of the remaining tasks consists of only one class, which makes the problem more challenging by increasing the total number of the incremental stages.  
Table~\ref{tab:varying_initial_task} presents that AFC outperforms the previous algorithms consistently.

\begin{table}[t]
\centering
\caption{Sensitivity analysis of $\lambda_\text{disc}$ on CIFAR100 with 50 stages.}
\label{tab:ablation}
\scalebox{0.90}{
\begin{tabular}{@{}c|ccccc@{}}
 \toprule
 \multicolumn{1}{c}{$\lambda_\text{disc}$} & 2.0 & 3.0 & 4.0 & 5.0 & 6.0 \\
 \midrule
AFC\,{\scriptsize (NME)} & 60.16 & 62.27& \red{62.58} & \black{62.37} & 61.83 \\
AFC\,{\scriptsize (CNN)} & 59.98 &  61.88 & \red{62.18} & \black{62.34} &  62.08 \\
 \bottomrule
\end{tabular}
}
\end{table}

\vspace{-0.2cm}
\paragraph{Effect of $\lambda_\text{disc}$ for discrepancy loss}
\label{appendix:effect_lambda_ell}
Table~\ref{tab:ablation} presents the results on CIFAR100 with 50 stages by varying $\lambda_\text{disc}$, which controls the balance between the previous knowledge and the current task.
As shown in Table~\ref{tab:ablation}, the proposed method is not sensitive to the hyperparameter.

%


\section{Conclusion}
\label{sec:conclusion}
We presented a simple but effective knowledge distillation approach via adaptive feature consolidation, which adjusts the weight of each feature map for balancing between adaptivity to new data and robustness to old ones in class incremental learning scenarios.
We formulated the importance of a feature in a principled way by deriving the relationship between the discrepancy in the distribution of each feature and the change in the loss. 
We further relaxed the original objective function and incorporated knowledge distillation to effectively reduce computational cost and memory overhead. 
Experimental results show that the proposed approach outperforms the existing ones by large margins.

{
\small
\vspace{-0.07in}
\paragraph{Acknowledgments}
This work was partly supported by Samsung Electronics Co., Ltd., the IITP grants [2021-0-01343, Artificial Intelligence Graduate School Program (Seoul National University), 2021-0-02068, Artificial Intelligence Innovation Hub], and the National Research Foundation of Korea (NRF) grant [2022R1A2C3012210] funded by the Korea government (MSIT).%

{\small
\bibliographystyle{ieee_fullname}
\bibliography{egbib}

\begin{thebibliography}{10}\itemsep=-1pt

\bibitem{abati2020conditional}
Davide Abati, Jakub Tomczak, Tijmen Blankevoort, Simone Calderara, Rita
  Cucchiara, and Babak~Ehteshami Bejnordi.
\newblock {Conditional Channel Gated Networks for Task-Aware Continual
  Learning}.
\newblock In {\em CVPR}, 2020.

\bibitem{ahn2021ss}
Hongjoon Ahn, Jihwan Kwak, Subin Lim, Hyeonsu Bang, Hyojun Kim, and Taesup
  Moon.
\newblock {SS-IL: Separated Softmax for Incremental Learning}.
\newblock In {\em ICCV}, 2021.

\bibitem{aljundi2018memory}
Rahaf Aljundi, Francesca Babiloni, Mohamed Elhoseiny, Marcus Rohrbach, and
  Tinne Tuytelaars.
\newblock {Memory Aware Synapses: Learning what (not) to forget}.
\newblock In {\em ECCV}, 2018.

\bibitem{aljundi2019gradient}
Rahaf Aljundi, Min Lin, Baptiste Goujaud, and Yoshua Bengio.
\newblock {Gradient Based Sample Selection for Online Continual Learning}.
\newblock In {\em NeurIPS}, 2019.

\bibitem{amodei2016deep}
Dario Amodei, Sundaram Ananthanarayanan, Rishita Anubhai, Jingliang Bai, Eric
  Battenberg, Carl Case, Jared Casper, Bryan Catanzaro, Qiang Cheng, Guoliang
  Chen, et~al.
\newblock {Deep Speech 2: End-to-End Speech Recognition in English and
  Mandarin}.
\newblock In {\em ICML}, 2016.

\bibitem{angermueller2016deep}
Christof Angermueller, Tanel P{\"a}rnamaa, Leopold Parts, and Oliver Stegle.
\newblock {Deep Learning for Computational Biology}.
\newblock {\em Molecular Systems Biology}, 2016.

\bibitem{ba2014deep}
Jimmy Ba and Rich Caruana.
\newblock {Do Deep Nets really Need to be Deep?}
\newblock In {\em NIPS}, 2014.

\bibitem{benjamin2019measuring}
Ari Benjamin, David Rolnick, and Konrad Kording.
\newblock {Measuring and Regularizing Networks in Function Space}.
\newblock In {\em ICLR}, 2019.

\bibitem{castro2018end}
Francisco~M Castro, Manuel~J Mar{\'\i}n-Jim{\'e}nez, Nicol{\'a}s Guil, Cordelia
  Schmid, and Karteek Alahari.
\newblock {End-to-End Incremental Learning}.
\newblock In {\em ECCV}, 2018.

\bibitem{chan2016listen}
William Chan, Navdeep Jaitly, Quoc Le, and Oriol Vinyals.
\newblock {Listen, Attend and Spell: A Neural Network for Large Vocabulary
  Conversational Speech Recognition}.
\newblock In {\em ICASSP}, 2016.

\bibitem{dhar2019learning}
Prithviraj Dhar, Rajat~Vikram Singh, Kuan-Chuan Peng, Ziyan Wu, and Rama
  Chellappa.
\newblock {Learning without Memorizing}.
\newblock In {\em {CVPR}}, 2019.

\bibitem{douillard2020podnet}
Arthur Douillard, Matthieu Cord, Charles Ollion, and Thomas Robert.
\newblock {PODNet: Pooled Outputs Distillation for Small-Tasks Incremental
  Learning}.
\newblock In {\em ECCV}, 2020.

\bibitem{he2016deep}
Kaiming He, Xiangyu Zhang, Shaoqing Ren, and Jian Sun.
\newblock {Deep Residual Learning for Image Recognition}.
\newblock In {\em CVPR}, 2016.

\bibitem{hinton2015distilling}
Geoffrey Hinton, Oriol Vinyals, and Jeff Dean.
\newblock {Distilling the Knowledge in a Neural Network}.
\newblock {\em arXiv preprint arXiv:1503.02531}, 2015.

\bibitem{hou2019learning}
Saihui Hou, Xinyu Pan, Chen~Change Loy, Zilei Wang, and Dahua Lin.
\newblock {Learning a Unified Classifier Incrementally via Rebalancing}.
\newblock In {\em CVPR}, 2019.

\bibitem{hsu2018re}
Yen-Chang Hsu, Yen-Cheng Liu, Anita Ramasamy, and Zsolt Kira.
\newblock {Re-evaluating Continual Learning Scenarios: A Categorization and
  Case for Strong Baselines}.
\newblock {\em arXiv preprint arXiv:1810.12488}, 2018.

\bibitem{hu2021distilling}
Xinting Hu, Kaihua Tang, Chunyan Miao, Xian-Sheng Hua, and Hanwang Zhang.
\newblock {Distilling Causal Effect of Data in Class-Incremental Learning}.
\newblock In {\em CVPR}, 2021.

\bibitem{huang2017densely}
Gao Huang, Zhuang Liu, Laurens Van Der~Maaten, and Kilian~Q Weinberger.
\newblock {Densely Connected Convolutional Networks}.
\newblock In {\em CVPR}, 2017.

\bibitem{ioffe2015batch}
Sergey Ioffe and Christian Szegedy.
\newblock {Batch Normalization: Accelerating Deep Network Training by Reducing
  Internal Covariate Shift}.
\newblock In {\em ICML}, 2015.

\bibitem{johnson2016perceptual}
Justin Johnson, Alexandre Alahi, and Li Fei-Fei.
\newblock {Perceptual Losses for Real-Time Style Transfer and
  Super-Resolution}.
\newblock In {\em ECCV}, 2016.

\bibitem{jung2016less}
Heechul Jung, Jeongwoo Ju, Minju Jung, and Junmo Kim.
\newblock {Less-forgetting Learning in Deep Neural Networks}.
\newblock {\em arXiv preprint arXiv:1607.00122}, 2016.

\bibitem{kirkpatrick2017overcoming}
James Kirkpatrick, Razvan Pascanu, Neil Rabinowitz, Joel Veness, Guillaume
  Desjardins, Andrei~A Rusu, Kieran Milan, John Quan, Tiago Ramalho, Agnieszka
  Grabska-Barwinska, et~al.
\newblock {Overcoming Catastrophic Forgetting in Neural Networks}.
\newblock {\em Proceedings of the national academy of sciences}, 2017.

\bibitem{cifar09}
Alex Krizhevsky, Vinod Nair, and Geoffrey Hinton.
\newblock {Learning Multiple Layers of Features from Tiny Images}.
\newblock Technical report, 2009.

\bibitem{lee2019neural}
Soochan Lee, Junsoo Ha, Dongsu Zhang, and Gunhee Kim.
\newblock {A Neural Dirichlet Process Mixture Model for Task-Free Continual
  Learning}.
\newblock In {\em ICLR}, 2019.

\bibitem{lenz2015deep}
Ian Lenz, Honglak Lee, and Ashutosh Saxena.
\newblock {Deep Learning for Detecting Robotic Grasps}.
\newblock {\em International Journal of Robotics Research}, 2015.

\bibitem{li2017learning}
Zhizhong Li and Derek Hoiem.
\newblock {Learning without Forgetting}.
\newblock {\em TPAMI}, 2017.

\bibitem{liu2021adaptive}
Yaoyao Liu, Bernt Schiele, and Qianru Sun.
\newblock {Adaptive Aggregation Networks for Class-Incremental Learning}.
\newblock In {\em CVPR}, 2021.

\bibitem{liu2020mnemonics}
Yaoyao Liu, Yuting Su, An-An Liu, Bernt Schiele, and Qianru Sun.
\newblock {Mnemonics Training: Multi-Class Incremental Learning without
  Forgetting}.
\newblock In {\em CVPR}, 2020.

\bibitem{long2015fully}
Jonathan Long, Evan Shelhamer, and Trevor Darrell.
\newblock {Fully Convolutional Networks for Semantic Segmentation}.
\newblock In {\em CVPR}, 2015.

\bibitem{mccloskey1989catastrophic}
Michael McCloskey and Neal~J Cohen.
\newblock {Catastrophic Interference in Connectionist Networks: The sequential
  Learning Problem}.
\newblock In {\em Psychology of Learning and Motivation}. Elsevier, 1989.

\bibitem{nam2016learning}
Hyeonseob Nam and Bohyung Han.
\newblock Learning multi-domain convolutional neural networks for visual
  tracking.
\newblock In {\em CVPR}, 2016.

\bibitem{noh2015learning}
Hyeonwoo Noh, Seunghoon Hong, and Bohyung Han.
\newblock Learning deconvolution network for semantic segmentation.
\newblock In {\em ICCV}, 2015.

\bibitem{odena2017conditional}
Augustus Odena, Christopher Olah, and Jonathon Shlens.
\newblock {Conditional Image Synthesis with Auxiliary Classifier Gans}.
\newblock In {\em ICML}, 2017.

\bibitem{ostapenko2019learning}
Oleksiy Ostapenko, Mihai Puscas, Tassilo Klein, Patrick Jahnichen, and Moin
  Nabi.
\newblock {Learning to Remember: A Synaptic Plasticity Driven Framework for
  Continual Learning}.
\newblock In {\em CVPR}, 2019.

\bibitem{park2021class}
Jaeyoo Park, Minsoo Kang, and Bohyung Han.
\newblock Class-incremental learning for action recognition in videos.
\newblock In {\em ICCV}, 2021.

\bibitem{prabhu2020gdumb}
Ameya Prabhu, Philip~HS Torr, and Puneet~K Dokania.
\newblock {GDumb: A Simple Approach that Questions Our Progress in Continual
  Learning}.
\newblock In {\em ECCV}, 2020.

\bibitem{rebuffi2017icarl}
Sylvestre-Alvise Rebuffi, Alexander Kolesnikov, Georg Sperl, and Christoph~H
  Lampert.
\newblock {iCaRL: Incremental Classifier and Representation Learning}.
\newblock In {\em CVPR}, 2017.

\bibitem{redmon2016you}
Joseph Redmon, Santosh Divvala, Ross Girshick, and Ali Farhadi.
\newblock {You Only Look Once: Unified, Real-Time Object Detection}.
\newblock In {\em CVPR}, 2016.

\bibitem{romero2014fitnets}
Adriana Romero, Nicolas Ballas, Samira~Ebrahimi Kahou, Antoine Chassang, Carlo
  Gatta, and Yoshua Bengio.
\newblock {FitNets: Hints For Thin Deep Nets}.
\newblock In {\em ICLR}, 2015.

\bibitem{ILSVRC15}
Olga Russakovsky, Jia Deng, Hao Su, Jonathan Krause, Sanjeev Satheesh, Sean Ma,
  Zhiheng Huang, Andrej Karpathy, Aditya Khosla, Michael Bernstein,
  Alexander~C. Berg, and Li Fei-Fei.
\newblock {ImageNet Large Scale Visual Recognition Challenge}.
\newblock {\em IJCV}, 2015.

\bibitem{rusu2016progressive}
Andrei~A Rusu, Neil~C Rabinowitz, Guillaume Desjardins, Hubert Soyer, James
  Kirkpatrick, Koray Kavukcuoglu, Razvan Pascanu, and Raia Hadsell.
\newblock {Progressive Neural Networks}.
\newblock {\em arXiv preprint arXiv:1606.04671}, 2016.

\bibitem{selvaraju2017grad}
Ramprasaath~R Selvaraju, Michael Cogswell, Abhishek Das, Ramakrishna Vedantam,
  Devi Parikh, and Dhruv Batra.
\newblock {Grad-Cam: Visual Explanations from Deep Networks via Gradient-based
  Localization}.
\newblock In {\em ICCV}, 2017.

\bibitem{shin2017continual}
Hanul Shin, Jung~Kwon Lee, Jaehong Kim, and Jiwon Kim.
\newblock {Continual Learning with Deep Generative Replay}.
\newblock In {\em NIPS}, 2017.

\bibitem{simon2021learning}
Christian Simon, Piotr Koniusz, and Mehrtash Harandi.
\newblock {On Learning the Geodesic Path for Incremental Learning}.
\newblock In {\em CVPR}, 2021.

\bibitem{NIPS2014_a14ac55a}
Ilya Sutskever, Oriol Vinyals, and Quoc~V Le.
\newblock {Sequence to Sequence Learning with Neural Networks}.
\newblock In {\em NIPS}, 2014.

\bibitem{van2019three}
Gido~M van~de Ven and Andreas~S Tolias.
\newblock {Three Scenarios for Continual Learning}.
\newblock {\em arXiv preprint arXiv:1904.07734}, 2019.

\bibitem{vaswani2017attention}
Ashish Vaswani, Noam Shazeer, Niki Parmar, Jakob Uszkoreit, Llion Jones,
  Aidan~N Gomez, Lukasz Kaiser, and Illia Polosukhin.
\newblock {Attention is All you Need}.
\newblock In {\em NIPS}, 2017.

\bibitem{wu2019large}
Yue Wu, Yinpeng Chen, Lijuan Wang, Yuancheng Ye, Zicheng Liu, Yandong Guo, and
  Yun Fu.
\newblock {Large Scale Incremental Learning}.
\newblock In {\em CVPR}, 2019.

\bibitem{yan2021dynamically}
Shipeng Yan, Jiangwei Xie, and Xuming He.
\newblock {DER: Dynamically Expandable Representation for Class Incremental
  Learning}.
\newblock In {\em CVPR}, 2021.

\bibitem{zagoruyko2016paying}
Sergey Zagoruyko and Nikos Komodakis.
\newblock {Paying More Attention to Attention: Improving the Performance of
  Convolutional Neural Networks via Attention Transfer}.
\newblock In {\em ICLR}, 2017.

\bibitem{zenke2017continual}
Friedemann Zenke, Ben Poole, and Surya Ganguli.
\newblock {Continual Learning Through Synaptic Intelligence}.
\newblock In {\em ICML}, 2017.

\bibitem{zhao2020maintaining}
Bowen Zhao, Xi Xiao, Guojun Gan, Bin Zhang, and Shu-Tao Xia.
\newblock {Maintaining Discrimination and Fairness in Class incremental
  Learning}.
\newblock In {\em CVPR}, 2020.

\end{thebibliography}
}
\clearpage
%
\begin{table*}[!t]
\caption{Comparisons between AFC and the state-of-the-art algorithms on CIFAR100 with two strategies for maintaining exemplar sets. 
One stores 2,000 images for the entire old classes in total, where 2,000 images are equally distributed across all the classes in the previous tasks while the other always keep 20 images for each of old classes, which are referred to as $R_{\text{total}} = 2,000$ and $R_{\text{per}}=20$, respectively. 
Note that the results with $R_{\text{per}}=20$ are copied from our main paper and methods with asterisks (*) denote our reproductions using the official code given by the authors.
A bold-faced number represents the best performance in each column.}
\label{tab:strategy}
\centering
\scalebox{0.90}{
\begin{tabular}{@{}l|cc|cc@{}}
 \toprule
 & \multicolumn{4}{c}{CIFAR100}\\
 & \multicolumn{2}{c|}{$R_{\text{total}} = 2000$} & \multicolumn{2}{c}{$R_{\text{per}}=20$} \\
  & 50 stages & 10 stages & 50 stages & 10 stages \\
  \multicolumn{1}{r|}{New classes per stage} & 1 & 5 & 1 & 5\\
 \midrule
iCaRL~\cite{rebuffi2017icarl} &42.34   &56.52  & 44.20 & 53.78\\ 
 BiC~\cite{wu2019large} & 48.44 & 55.03 &47.09 & 53.21 \\
 UCIR\,{\scriptsize (NME)}~\cite{hou2019learning}  & 54.08 & 62.89 & 48.57  & 60.83 \\
 UCIR\,{\scriptsize (CNN)}~\cite{hou2019learning}& 55.20 &63.62 & 49.30 & 61.22 \\
 GDumb*~\cite{prabhu2020gdumb} & 60.98 & 61.33 & 59.76 & 60.24 \\
 PODNet\,{\scriptsize (NME)}~\cite{douillard2020podnet} & 62.47 & 64.60 & 61.40 & 64.03 \\
 PODNet\,{\scriptsize (CNN)}~\cite{douillard2020podnet} &  61.87 & 64.68 &57.98 & 63.19\\
 PODNet\,{\scriptsize (NME)}*~\cite{douillard2020podnet} & 60.53 & 64.30 & 56.78& 63.27 \\
 PODNet\,{\scriptsize (CNN)}*~\cite{douillard2020podnet} & 61.86  & 64.92 & 57.86 & 62.78 \\
 \hdashline
 AFC\,{\scriptsize (NME)}& \black{63.88} & \black{65.42} &\red{62.58} & \black{64.29} \\
 AFC \,{\scriptsize (CNN)}& \red{64.01} & \red{65.92} &\black{62.18} & \red{64.98} \\
 \bottomrule
\end{tabular}
}
\end{table*}
\section{Appendix}
\label{appendix:appendix}
This section first derives the upper bound of the expectation of the loss change, $\triangle \mathcal{L}(Z'_{\ell,c})$, in model updates, which is a detailed version of \eqref{eq:expectation_over_loss_prime_upper_bound_0}, \eqref{eq:expectation_over_loss_prime_upper_bound_1}, and \eqref{eq:expectation_over_loss_prime_upper_bound_2}.
Secondly, we present the results from two different strategies to maintain exemplar sets.
Thirdly, we compare with PODNet based on other metrics and using the random exemplar selection rule.
Finally, we discuss the limitation.
\subsection{Detailed Derivation of \eqref{eq:expectation_over_loss_prime_upper_bound_0}, \eqref{eq:expectation_over_loss_prime_upper_bound_1}, and \eqref{eq:expectation_over_loss_prime_upper_bound_2}} 
\label{appendix:detail_derivation}
By Eq.~\eqref{eq:loss_approx}, the expected loss change given by model updates is given by:
\begin{align}
    \mathbb{E}\Big[\triangle \mathcal{L}(Z'_{\ell,c})\Big]  
	= \mathbb{E}\Big[  \langle \nabla_{Z_{\ell,c}} \mathcal{L}(G_\ell(Z_\ell), y), Z'_{\ell,c} - Z_{\ell,c} \rangle _F  \Big], \nonumber	
\end{align}
where the expectations are taken over $\mathcal{P}^{t-1}_{\text{data}}$. 
Let $A_{m,n}$ be the element at the $m^{\text{th}}$ row and $n^{\text{th}}$ column of the matrix $A$.
Then, Eq.~\eqref{eq:expectation_over_loss_prime_upper_bound_1} is given by the following derivation:
\begin{align}
	 &\langle \nabla_{Z_{\ell,c}}  \mathcal{L}(G_\ell(Z_\ell), y), Z'_{\ell,c} - Z_{\ell,c} \rangle _F   \nonumber \\
	 &= \text{tr}(\nabla_{Z_{\ell,c}} \mathcal{L}(G_\ell(Z_\ell), y)^T(Z'_{\ell,c} - Z_{\ell,c}))  \nonumber  \\ 	
	&= \sum_{i=1}^{W_\ell} (\nabla_{Z_{\ell,c}} \mathcal{L}(G_\ell(Z_\ell), y)^T(Z'_{\ell,c} - Z_{\ell,c}))_{i,i} 
	 \nonumber \\
	 &= \sum_{i=1}^{W_\ell} \sum_{j=1}^{H_\ell} (\nabla_{Z_{\ell,c}} \mathcal{L}(G_\ell(Z_\ell), y)^T)_{i,j}(Z'_{\ell,c} - Z_{\ell,c})_{j,i} \nonumber \\
	 &= \sum_{i=1}^{W_\ell} \sum_{j=1}^{H_\ell} (\nabla_{Z_{\ell,c}} \mathcal{L}(G_\ell(Z_\ell), y))_{j,i} \cdot (Z'_{\ell,c} - Z_{\ell,c})_{j,i} \nonumber \\
	 &\leq \sqrt{\hspace{-0.10cm}\Big(\sum_{i=1}^{W_\ell} \sum_{j=1}^{H_\ell}(\nabla_{Z_{\ell,c}} \mathcal{L}(G_\ell(Z_\ell), y))_{j,i}^2\Big)} \times \nonumber \\
	& \quad \; \sqrt{\Big(\sum_{i=1}^{W_\ell} \sum_{j=1}^{H_\ell}(Z'_{\ell,c} - Z_{\ell,c})_{j,i}^2\Big)\hspace{-0.10cm} }\nonumber \\
	 & = \| \nabla_{Z_{\ell,c}} \mathcal{L}(G_\ell(Z_\ell), y) \|_F \cdot \| Z'_{\ell,c} - Z_{\ell,c}\| _F 
	 \label{eq:appendix7}, 
\end{align}

On the other hand, for any given random variable $X$ and $Y$, the following inequality holds for any real $k$: 
\begin{align}
&\mathbb{E}[(kX+Y)^2] \geq 0 \nonumber \\ 
 \iff  &\mathbb{E}[X^2] k^2 + 2k \mathbb{E}[XY] + \mathbb{E}[Y^2] \geq 0.\label{eq:appendix8}
\end{align}
Because the number of all zeros  for the above inequality is at most 1, the discriminant must be less than or equal to 0 as follows:
\begin{align}
 &\mathbb{E}[XY]^2 - \mathbb{E}[X^2] \mathbb{E}[Y^2] \leq 0 \nonumber  \\
 \iff |&\mathbb{E}[XY]| \leq \sqrt{\mathbb{E}[X^2] \mathbb{E}[Y^2]}.\label{eq:appendix10} 
\end{align}
By Eq.~\eqref{eq:appendix10}, we have the upper bound of the expectation of Eq.~\eqref{eq:appendix7}, which is given by  
\begin{align}
	\mathbb{E} \Big[ \| \nabla_{Z_{\ell,c}} \mathcal{L}(G_\ell(Z_\ell), y) \|_F \cdot \| Z'_{\ell,c} - Z_{\ell,c}\| _F \Big] \nonumber \\
	\leq \sqrt{\mathbb{E}\Big[ \| \nabla_{Z_{\ell,c}} \mathcal{L}(G_\ell(Z_\ell), y) \|_F ^2 \Big] \cdot \mathbb{E}\Big[\| Z'_{\ell,c} - Z_{\ell,c}\| _F ^2\Big]},
	\label{eq:appendix11} 
\end{align}
and, we obtain Eq.~\eqref{eq:expectation_over_loss_prime_upper_bound_2}.  

\subsection{Results from Two Different Strategies to Maintain Exemplar sets} 
\label{appendix:effect_memory_budget_fixed_capacity}
In all experiments of the main paper, we store the same number of exemplars for each class in the previous tasks, \eg 20 examples per class ($R_\text{per} = 20$).
We also run the experiments with another strategy, where we allocate a fixed amount of memory for the entire old tasks, \eg 2,000 examples in total ($R_\text{total} = 2000$).
Table~\ref{tab:strategy} illustrates the results on CIFAR-100 from the two strategies, which show the outstanding performance of the proposed approach, AFC, in both cases.

%
\begin{table*}[!t]
\caption{Performance comparison between AFC and PODNet on CIFAR100 based on backward transfer metric.} 
\label{tab:Cifar_backward_transfer}
\centering
\scalebox{0.90}{
\begin{tabular}{@{}l|cccc@{}}
 \toprule
 \multicolumn{1}{c|}{Backward Transfer (\%)} & \multicolumn{4}{c}{CIFAR100}\\
& 50 stages & 25 stages & 10 stages & 5 stages\\
 \multicolumn{1}{r|}{New classes per stage} & 1 & 2 & 5 & 10\\
 \midrule
  PODNet\,{\scriptsize (NME)}*~\cite{douillard2020podnet} &  -20.05\mypm{}0.85 & \black{-17.38\mypm{}1.11} & \red{-14.49\mypm{}0.78} &  \red{-11.98\mypm{}0.78}\\
 PODNet\,{\scriptsize (CNN)}*~\cite{douillard2020podnet} & -29.49\mypm{}1.65 & -27.06\mypm{}1.71 & -25.60\mypm{}1.32 & -23.45\mypm{}1.58\\
 \hdashline
 AFC\,{\scriptsize (NME)}& \red{-15.34\mypm{}0.54} & \red{-13.64\mypm{}0.97} & \black{-14.86\mypm{}0.43} & \black{-12.91\mypm{}1.90}\\
 AFC\,{\scriptsize (CNN)}& \black{-18.52\mypm{}1.45} & -17.42\mypm{}0.77  & -18.53\mypm{}0.80 & -17.18\mypm{}1.19\\
 \bottomrule
\end{tabular}
}
\end{table*}
%
%
\begin{table*}[ht!]
\caption{Performance comparison between AFC and PODNet on CIFAR100 based on average accuracy metric.} 
\label{tab:Cifar_average_accuracy}
\centering
\scalebox{0.90}{
\begin{tabular}{@{}l|cccc@{}}
 \toprule
 \multicolumn{1}{c|}{Average Accuracy (\%)} & \multicolumn{4}{c}{CIFAR100}\\
& 50 stages & 25 stages & 10 stages & 5 stages\\
 \multicolumn{1}{r|}{New classes per stage} & 1 & 2 & 5 & 10\\
 \midrule
  PODNet\,{\scriptsize (NME)}*~\cite{douillard2020podnet} &  46.80\mypm{}1.21 & 49.63\mypm{}1.19 & 53.10\mypm{}0.78 & 55.77\mypm{}0.55 \\
 PODNet\,{\scriptsize (CNN)}*~\cite{douillard2020podnet} & 48.57\mypm{}0.25 & 51.30\mypm{}0.61 & 52.70\mypm{}0.36 & 54.80\mypm{}0.70 \\
 \hdashline
 AFC\,{\scriptsize (NME)}& \black{52.30 \mypm{} 0.50} & \black{54.37 \mypm{} 0.64} & \black{54.70\mypm{}0.72} & \black{56.73\mypm{}1.07} \\
 AFC\,{\scriptsize (CNN)}& \red{52.77 \mypm{} 0.21} & \red{54.50 \mypm{} 0.44} & \red{54.93\mypm{}0.51} & \red{56.87\mypm{}0.40} \\
 \bottomrule
\end{tabular}
}
\end{table*}
%
\subsection{Results by Other Metrics}
\label{appendix:results_other_metrics}
We report the backward transfer and average accuracy by comparing AFC with PODNet. Table~\ref{tab:Cifar_backward_transfer} and~\ref{tab:Cifar_average_accuracy}  demonstrate that AFC also outperforms PODNet in terms of the metrics in most cases. Although AFC is marginally worse than PODNet (NME) on lower stage settings (10 and 5 stages) in terms of the backward transfer metric, it clearly outperforms PODNet for 50 and 25 stage settings, which suffer from more catastrophic forgetting.

 %
\begin{table*}[ht!]
\caption{Performance comparison between AFC and PODNet on CIFAR100 using the random selection for exemplars.} 
\label{tab:Cifar_random}
\centering
\scalebox{0.90}{
\begin{tabular}{@{}l|cccc@{}}
 \toprule
  \multicolumn{1}{c|}{Random Exemplar Selection Rule} & \multicolumn{4}{c}{CIFAR100}\\
  & 50 stages & 25 stages & 10 stages & 5 stages\\
 \multicolumn{1}{r|}{New classes per stage} & 1 & 2 & 5 & 10\\
 \midrule
  PODNet\,{\scriptsize (NME)}*~\cite{douillard2020podnet} & 54.99\mypm{}0.84 & 58.11\mypm{}0.79 & 61.75\mypm{} 1.04 & 63.67\mypm{}1.02\\
 PODNet\,{\scriptsize (CNN)}*~\cite{douillard2020podnet} & 55.55\mypm{}1.83 & 58.18\mypm{}1.56 & 61.12\mypm{}1.39 & 63.35\mypm{}1.01\\
 \hdashline
 AFC\,{\scriptsize (NME)}& \red{60.37\mypm{}0.83} & \red{62.12\mypm{}0.68} & \black{62.65\mypm{}0.95} & \black{64.31\mypm{}0.55}\\
 AFC\,{\scriptsize (CNN)}& \black{59.51\mypm{}1.04} & \black{61.55\mypm{}0.96} & \red{62.90\mypm{}1.15} & \red{64.53\mypm{}1.10}\\
 \bottomrule
\end{tabular}
}
\end{table*}
%
\subsection{Results by Random Exemplar Selection Rule}
\label{appendix:results_random_selection}
Table~\ref{tab:Cifar_random} shows that the nearest-mean of exemplars rule is more effective than the random selection rule for both AFC and PODNet compared with the result of Table~\ref{tab:Cifar} in the main paper. Note that AFC also outperforms PODNet combined with the random selection for the exemplars.

\subsection{Limitation}
\label{appendix:limitation}
\vspace{-0.1cm}
Existing methods for class incremental learning essentially require the exemplar sets for the old tasks in order to reduce the catastrophic forgetting problem.
Storing the exemplar sets such as the personal medical datasets potentially poses risk in privacy.
Moreover, the proposed method and other functional regularization approaches require additional forward passes for the old models, which leads to extra computational cost in terms of FLOPs, memory, and power consumptions.
Therefore, it would be important to develop algorithms applicable to the resource-hungry systems.

\end{document}